\newtheorem{theorem}{Theorem}
\newtheorem{definition}[theorem]{Definition}
\DeclareMathOperator*{\arginf}{arg\,inf}
\DeclareMathOperator*{\argsup}{arg\,sup}
\DeclareMathOperator{\sgn}{sgn}
\DeclareMathOperator{\Prtxt}{Pr}
\newcommand{\RR}{\mathbb{R}}      
\newcommand{\evp}[2]{\mathbb{E}_{#2} \left[#1\right]} 
\newcommand{\abs}[1]{\left| #1 \right|}
\newcommand{\prp}[2]{\Prtxt_{#2} \left(#1\right)}
\newcommand{\cH}{\mathcal{H}}
\newcommand{\cX}{\mathcal{X}}
\newcommand{\lrp}[1]{\left(#1\right)}
\newcommand{\lrb}[1]{\left[#1\right]}
\title{
Linking Generative Adversarial Learning and Binary Classification
}
\author{
Akshay Balsubramani\\
\url{abalsubr@stanford.edu}
}
\begin{document}

\maketitle

\begin{abstract}
In this note, we point out a basic link between generative adversarial (GA) training and binary classification -- any powerful discriminator essentially computes an ($f$-)divergence between real and generated samples. 
The result, repeatedly re-derived in decision theory, has implications for GA Networks (GANs), providing an alternative perspective on training $f$-GANs by designing the discriminator loss function.
\end{abstract}


\section{Generating a Distribution}
Imagine we are given real data with distribution $P_r (x)$ over a feature space $\cX$, 
and wish to learn a distribution $P_g (x)$ that is as ``close" as possible to $P_r$. 
Closeness will be measured by some divergence function $D ( \cdot, \cdot )$ between probability distributions. 
So the generator is typically solving 
\begin{align}
\label{eq:genprob}
P_g^* &= \arginf_{P_g} D (P_g , P_r)
\end{align}
where $P_g$ is in some class of distributions specified by the generator. 
This manuscript considers $D(\cdot,\cdot)$ to be an $f$-divergence:
\begin{definition}\textbf{$f$-divergence.} For any convex $f (t)$ with $f(1) = 0$, 
define the $f$-divergence of $P_g$ to $P_r$
\footnote{Note that $D_f (P_g , P_r)$ need not be positive.} 
as 
$$ D_f (P_g , P_r) = \evp{ f \lrp{\frac{P_g (x)}{P_r (x)}}}{x \sim P_r} $$
\end{definition}


Here we examine the method of highlighting differences between $P_r$ and $P_g$ 
by feeding them to a binary classifier (discriminator) 
with corresponding labels $y = \pm 1$, 
in equal proportion as for a typical GAN setup, 
so that the data input to the discriminator are assigned a positive label if they are real data: 
\begin{align*}
p := \prp{y = +1}{} = \frac{1}{2}
\quad,\quad
P_r (x) = \prp{x \mid y = + 1}{}
\quad,\quad
P_g (x) = \prp{x \mid y = - 1}{}
\end{align*}
Recall that any two-class loss function can equivalently be written in terms of \emph{partial losses} $\ell_{+} (g)$ and $\ell_{-} (g)$; 
these are the losses with respect to true labels $\pm 1$ respectively, as a function of the label prediction $g$. 

The discrimination problem is to find a function $h$ in some model class $\cH$ that attempts to minimize some loss on average over the data: 
\begin{align}
\label{eq:lossmin}
\inf_{h \in \cH} \evp{ \ell (y , h(x)) }{(x, y)}
\end{align}
The \emph{generative} view of binary classification \cite{RW11}
writes this in terms of the class-conditional distributions $\prp{x \mid y = \pm 1}{}$: 
\begin{align}
\label{eq:lossclasscondgan}
\evp{ \ell (y , h(x)) }{y} &= \prp{x , y = +1}{} \ell_{+} (h(x)) + \prp{x , y = -1}{} \ell_{-} (h(x)) \nonumber \\
&= p \prp{x \mid y = +1}{} \ell_{+} (h(x)) + (1 - p) \prp{x \mid y = -1}{} \ell_{-} (h(x)) \nonumber \\
&= \frac{1}{2} \lrb{ P_r (x) \ell_{+} (h(x)) + P_g (x) \ell_{-} (h(x)) }
\end{align}

The optimization problem \eqref{eq:lossmin} is standard in binary classification. 
Typically, $\cH$ is chosen to be a fairly rich class of deep binary classifiers. 
This means that its performance is close to the Bayes risk, i.e. 
the minimum risk over measurable functions 
$\inf_{h} \evp{ \ell (y , h(x)) }{(x, y)}$ \cite{RW11}. 
So the excess risk 
\begin{align*}
\epsilon (\cH) := \inf_{h \in \cH} \evp{ \ell (y , h(x)) }{(x, y)} - \inf_{h} \evp{ \ell (y , h(x)) }{(x, y)}
\end{align*}
is small.


\section{Main Result}

\begin{theorem}
\label{thm:binlosses}
Take any loss function $\ell_{\pm}$ and any model class $\cH$. 
Define $\displaystyle f (s) := \sup_{\alpha} \lrp{ - \ell_{+} (\alpha) - s \ell_{-} (\alpha) } $. 
This is a maximum of linear functions, so it is convex. Then 
\begin{align*}
\inf_{h \in \cH} \evp{ \ell (y , h(x)) }{(x, y)} = 
- \frac{1}{2} D_{f} (P_g , P_r) + \epsilon (\cH)
\end{align*}
\end{theorem}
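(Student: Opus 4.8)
The plan is to reduce the Bayes risk to a pointwise optimization over predictions and then recognize the resulting integrand as $f$ evaluated at the likelihood ratio $P_g/P_r$. First I would start from the generative rewriting in \eqref{eq:lossclasscondgan}, so that the full risk is
\[
\evp{ \ell (y , h(x)) }{(x, y)} = \frac{1}{2}\int \lrb{P_r(x)\ell_+(h(x)) + P_g(x)\ell_-(h(x))}\,dx.
\]
Because taking the infimum over all measurable $h$ amounts to choosing the prediction $\alpha = h(x)$ independently at each $x$, I would push the infimum inside the integral to obtain
\[
\inf_h \evp{ \ell (y , h(x)) }{(x, y)} = \frac{1}{2}\int \inf_{\alpha}\lrb{P_r(x)\ell_+(\alpha) + P_g(x)\ell_-(\alpha)}\,dx.
\]

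Next, on the set where $P_r(x) > 0$ I would factor out $P_r(x)$ and introduce $s = P_g(x)/P_r(x)$, turning the inner infimum into $P_r(x)\inf_{\alpha}\lrb{\ell_+(\alpha) + s\,\ell_-(\alpha)}$. Writing this infimum as a negated supremum, $\inf_{\alpha}\lrb{\ell_+(\alpha) + s\,\ell_-(\alpha)} = -\sup_{\alpha}\lrp{-\ell_+(\alpha) - s\,\ell_-(\alpha)} = -f(s)$, matches the definition of $f$ in the statement exactly. Substituting back gives the integrand $-P_r(x)\,f(P_g(x)/P_r(x))$, and integrating recovers $\inf_h \evp{ \ell (y , h(x)) }{(x, y)} = -\frac{1}{2}\evp{f(P_g(x)/P_r(x))}{x\sim P_r} = -\frac{1}{2}D_f(P_g, P_r)$. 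Note that convexity of $f$ is automatic as a supremum of affine functions of $s$, and the argument never invokes $f(1)=0$, consistent with the footnote that $D_f$ here need not be positive.

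Finally, I would close using only the definition of the excess risk: since $\epsilon(\cH) = \inf_{h\in\cH}\evp{ \ell (y , h(x)) }{(x, y)} - \inf_h \evp{ \ell (y , h(x)) }{(x, y)}$, adding $\epsilon(\cH)$ to the Bayes-risk expression above immediately yields the claimed identity.

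The step I expect to require the most care is the interchange of the infimum and the integral, which is where measurability enters. One must argue that a pointwise minimizer $\alpha^*(x)$ of the bracketed expression can be chosen to form a measurable function $h^*$, so that the pointwise-optimal value is actually attained (or approached) within the class of measurable predictors defining the Bayes risk; a measurable-selection argument, or approximation by near-optimal measurable $h$, suffices. I would also handle the null set $\{P_r(x)=0\}$ separately, since the likelihood ratio and the factoring are only meaningful where $P_r(x)>0$, but because $D_f$ is an expectation under $P_r$ that set contributes nothing to either side.
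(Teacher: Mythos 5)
Your proposal is correct and follows essentially the same route as the paper's proof: both start from the generative decomposition \eqref{eq:lossclasscondgan}, reduce the Bayes risk to a pointwise infimum over predictions $\alpha$, recognize $\inf_{\alpha}\lrb{\ell_+(\alpha) + s\,\ell_-(\alpha)} = -f(s)$ at the likelihood ratio $s = P_g(x)/P_r(x)$, and then add $\epsilon(\cH)$ by definition of the excess risk. The only (immaterial) difference is ordering --- the paper factors out $P_r(x)$ and switches to expectation under $P_r$ before pushing the infimum inside, while you exchange infimum and integral first --- and your explicit attention to measurable selection and the set $\{P_r(x)=0\}$ is care the paper silently omits.
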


Changing the model class $\cH$ only changes the second term of Thm. \ref{thm:binlosses}.
Therefore, when $\cH$ is rich enough that the excess risk $\epsilon (\cH)$ is small, 
the loss function $\ell$ of the discrimination problem corresponds almost exactly to an $f$-divergence.

\subsection{GA Training Solves the Generation Problem with $f$-Divergences}

Revisiting \eqref{eq:genprob}, to find $P_g^*$ to be ``close'' to $P_r$ 
under some $f$-divergence $D_{f}$, one could solve
\begin{align*}
P_g^* &= \arginf_{P_g} D_{f} (P_g , P_r) 
= \argsup_{P_g} \lrb{ - D_{f} (P_g , P_r) }
= \argsup_{\prp{x \mid y=-1}{}} \lrb{ \inf_{h \in \cH} \evp{ \ell (y , h(x)) }{(x, y)} - \epsilon(\cH) } \\
&\approx \argsup_{\prp{x \mid y=-1}{}} \lrb{ \inf_{h \in \cH} \evp{ \ell (y , h(x)) }{(x, y)} }
\end{align*}
So the adversarial game interaction between the generator and discriminator emerges as the solution to the generation problem for powerful enough discriminators, for any $\ell, \cH$. 


\subsection{Examples}

Table 1 
shows the correspondence between $\ell$ and $f$ for several common $f$-divergences. 
Similar lists can be found in \cite{NWJ09, RW11}.

In the GA setup, the variable $s$ is always a function over the data space $\cX$. 
The maximizing $\alpha$ in $\argsup_{\alpha} \lrp{ - \ell_{+} (\alpha) - s \ell_{-} (\alpha) }$ is a function of $s$; 
as a function of the data $\alpha (x)$, it is the optimal discriminator 
$h^* (x) = h^* (s(x))$.

\begin{table}[!htbp]
\centering
\small
\begin{tabular}{| Sc || Sc | p{2cm} | Sc || p{3cm} |} \hline \hline 
Loss $\ell$ & Partial losses & $f (s)$  &  $h^* (s)$ & $f$-divergence \\ \hline \hline
 0-1 & $\ell_{\pm} (g) = \frac{1}{2}\lrp{1 \mp g}$  & $\frac{1}{2} \abs{s-1}$ &   $\sgn (s - 1)$ 
 & Total variation dist. \\ \hline
   Log & $\ell_{\pm} (g) = \ln \lrp{\frac{2}{1 \pm g}}$  & $- \ln \lrp{1 + s} - s \ln \lrp{\frac{1 + s}{s}}$  & $\frac{1-s}{1+s}$ 
  & Jensen-Shannon dist. \\ \hline
   Square & $\ell_{\pm} (g) = \lrp{1 \mp g}^2$  & $- \frac{s}{1+s} + \frac{1}{2}$ & $\frac{1-s}{1+s}$ 
   & Triangular discrimination dist. \\ \hline
   CW (param. $c$) & $\ell_{\pm} (g) = \lrp{\frac{1}{2} - \lrp{\frac{1}{2} - c} } \lrp{1 \mp g}$  & $\abs{1-c - cs} - cs + c - \abs{1-2c}$ & $\sgn (1 - c - cs)$
   & -- \\ \hline
   Exponential & $\ell_{\pm} (g) = \exp( \mp g)$  & $- 2 \sqrt{s} + 2$   & $- \frac{1}{2} \ln s$ 
   & Hellinger dist. \\ \hline
    ``Boosting" & $\ell_{\pm} (g) = \sqrt{\frac{1 \mp g}{1 \pm g}}$  & $- 2 \sqrt{s} + 2$  & $\frac{1-s}{1+s}$
   & Hellinger dist. \\ \hline
\end{tabular}
\label{tab:alllosses}
\caption{Some discriminator losses, with corresponding $f$-divergences.}
\end{table}

\section{Related Work}

The most related work to this manuscript is the $f$-GAN approach of \cite{NCT16fgan}, to our knowledge. 
This solves the same problem of minimizing the $f$-divergence to the true distribution, 
but by changing the discriminator objective from the binary classification risk, 
(in contrast to Thm. \ref{thm:binlosses} which just interprets the risk). 
The key fact is that a convex function $f$ has a well-defined \emph{convex conjugate} function $f^*$ 
such that $f(u) = \sup_{t \in \RR} \lrb{tu - f^* (t)}$, 
so that the following is true\footnote{Ignoring conjugacy domain issues for simplicity.}:
\begin{align}
\label{eq:fganvar}
D_{f} (P_r , P_g) 
&= \evp{ \sup_{t} \lrp{t \frac{P_r (x)}{P_g (x)} - f^* (t)} }{x \sim P_g} 
= \sup_{t} \lrp{ \evp{t \frac{P_r (x)}{P_g (x)} - f^* (t)}{x \sim P_g} } \nonumber \\
&= \sup_{h} \lrb{ \evp{h(x)}{x \sim P_r} - \evp{f^* (h(x))}{x \sim P_g} } 
\geq \sup_{h \in \cH} \lrb{ \evp{h(x)}{x \sim P_r} - \evp{f^* (h(x))}{x \sim P_g} }
\end{align}
\cite{NCT16fgan} use this bound from \cite{NWJ10}. 
It is quite tight when $\cH$ is rich, exactly when Thm. \ref{thm:binlosses} is strong, 
though the order of the arguments is switched.
\footnote{Note that \eqref{eq:fganvar} puts the modeled (generated) distribution in the second argument 
rather than the first; the two orderings are related by Csisz\'{a}r duality \cite{RW11}. 
Our proof of Theorem \ref{thm:binlosses} can be followed to prove an exact analogue of Theorem \ref{thm:binlosses} viewing the discriminator risk 
as $D_{f} (P_r , P_g)$ with the arguments interchanged as in \eqref{eq:fganvar}. 
The analogue result only differs in the definition of the convex function generating the divergence, which is 
$\sup_{\alpha} \lrp{ - \ell_{-} (\alpha) - s \ell_{+} (\alpha) } $ instead of $f$ as defined in Thm. \ref{thm:binlosses}. 
In this manuscript, we follow the convention of taking the real distribution to be the second argument with respect to which we measure the ``excess description length" of using $P_g$.}

More broadly, since the original GAN paper \cite{GAN14},  
the GA approach has enjoyed a string of recent empirical successes 
with very rich model classes $\cH$ \cite{DCGAN15, BEGAN17}, in accordance with Thm. \ref{thm:binlosses}. 


\section{Summary}

The correspondences here fundamentally link generative adversarial training and the generation problem, 
and most are well known in decision theory. 
However, within the GAN literature they do not appear well known and lack references, 
which we address in this note.

\bibliography{GANwriteup}
\bibliographystyle{alpha}

\section{Proofs}

This section relates binary classification loss functions to $f$-divergences, recapitulating \cite{LV06, NWJ09}.


\begin{proof}[Proof of Theorem \ref{thm:binlosses}]
From \eqref{eq:lossclasscondgan}, 
if $\mu$ is the base measure over $\cX$,
\begin{align*}
\inf_{h} \;&\evp{ \ell (y , h(x)) }{(x, y)} 
= \inf_{h} \evp{ \evp{ \ell (y , h(x))}{y} }{x} 
= \frac{1}{2} \inf_{h} \evp{ P_r (x) \ell_{+} (h(x)) + P_g (x) \ell_{-} (h(x)) }{x \sim \mu} \\
&= \frac{1}{2} \inf_{h} \evp{ P_r (x) \lrp{ \ell_{+} (h(x)) + \frac{P_g (x)}{P_r (x)} \ell_{-} (h(x)) } }{x \sim \mu} 
= \frac{1}{2} \inf_{h} \evp{ \ell_{+} (h(x)) + \frac{P_g (x)}{P_r (x)} \ell_{-} (h(x)) }{x \sim P_r} \\
&= \frac{1}{2} \evp{ \inf_{\alpha} \lrp{ \ell_{+} (\alpha) + \frac{P_g (x)}{P_r (x)} \ell_{-} (\alpha) } }{x \sim P_r}
= - \frac{1}{2} \evp{ \sup_{\alpha} \lrp{ - \ell_{+} (\alpha) - \frac{P_g (x)}{P_r (x)} \ell_{-} (\alpha) } }{x \sim P_r} \\
&= - \frac{1}{2} D_{f} (P_g , P_r)
\end{align*}
Adding $\epsilon(\cH)$ to both sides proves the result.
\end{proof}

\end{document}